\theoremstyle{plain}
\newtheorem{theorem}{Theorem}[section]
\theoremstyle{remark}
\newtheorem{remark}{Remark}[section]
\numberwithin{equation}{section}
\numberwithin{figure}{section}
\numberwithin{table}{section}
\title[On the approximation by single hidden layer networks]{On the approximation by single hidden layer feedforward neural networks with fixed weights}
\author{Namig J. Guliyev}
\address{Institute of Mathematics and Mechanics, Azerbaijan National Academy of Sciences, 9 B.~Vahabzadeh str., AZ1141, Baku, Azerbaijan.}
\email{njguliyev@gmail.com}
\author{Vugar E. Ismailov}
\address{Institute of Mathematics and Mechanics, Azerbaijan National Academy of Sciences, 9 B.~Vahabzadeh str., AZ1141, Baku, Azerbaijan.}
\email{vugaris@mail.ru}
\subjclass[2010]{41A30, 41A63, 65D15, 68T05, 92B20}
\keywords{feedforward neural network, approximation, hidden layer, sigmoidal function, activation function, weight}
\begin{document}
\maketitle
\begin{abstract}
Feedforward neural networks have wide applicability in various disciplines of science due to their universal approximation property. Some authors have shown that single hidden layer feedforward neural networks (SLFNs) with fixed weights still possess the universal approximation property provided that approximated functions are univariate. But this phenomenon does not lay any restrictions on the number of neurons in the hidden layer. The more this number, the more the probability of the considered network to give precise results. In this note, we constructively prove that SLFNs with the fixed weight $1$ and two neurons in the hidden layer can approximate any continuous function on a compact subset of the real line. The applicability of this result is demonstrated in various numerical examples. Finally, we show that SLFNs with fixed weights cannot approximate all continuous multivariate functions.
\end{abstract}

\tableofcontents

\section{Introduction} \label{sec:introduction}

Approximation capabilities of single hidden layer feedforward neural networks (SLFNs) have been investigated in many works over the past 30 years. Typical results show that SLFNs possess the universal approximation property; that is, they can approximate any continuous function on a compact set with arbitrary precision.

An SLFN with $r$ units in the hidden layer and input $\mathbf{x} = (x_{1}, \ldots, x_{d})$ evaluates a function of the form
\begin{equation} \label{eq:intro}
  \sum_{i=1}^r c_i \sigma(\mathbf{w}^i \cdot \mathbf{x} - \theta_i),
\end{equation}
where the weights $\mathbf{w}^i$ are vectors in $\mathbb{R}^{d}$, the thresholds $\theta_i$ and the coefficients $c_i$ are real numbers, and the activation function $\sigma$ is a univariate function. Properties of this neural network model have been studied quite well. By choosing various activation functions, many authors proved that SLFNs with the chosen activation function possess the universal approximation property (see, e.g., \cite{CC93, CL92, CS13, C90, C89, F89, GW88, H91, MM92}).
That is, for any compact set $Q \subset \mathbb{R}^{d}$, the class of functions~(\ref{eq:intro}) is dense in $C(Q)$, the space of continuous functions on $Q$.
The most general and complete result of this type was obtained by Leshno,
Lin, Pinkus and Schocken \cite{LLPS93}. They proved that a continuous activation function $\sigma$ has the universal approximation property (or density property) if and only if it is not a polynomial. This result has shown the power of SLFNs within all possible choices of the activation function $\sigma$, provided that $\sigma$ is continuous. For a detailed review of these and many other results, see \cite{P99}.

In many applications, it is convenient to take the activation function $\sigma$ as a \emph{sigmoidal function} which is defined as
\begin{equation*}
  \lim_{t \to -\infty} \sigma(t) = 0 \quad \text{ and } \quad \lim_{t \to +\infty} \sigma(t) = 1.
\end{equation*}
The literature on neural networks abounds with the use of such functions and their superpositions (see, e.g., \cite{CX10, CL92, CS13, C89, F89, GW88, HH04, IKM17, I92, K92, MM92}). The possibility of approximating a continuous function on a compact subset of the real line or $d$-dimensional space by SLFNs with a sigmoidal activation function has been well studied in a number of papers.

In recent years, the theory of neural networks has been developed further in this direction. For example, from the point of view of practical applications, neural networks with a restricted set of weights have gained a special interest (see, e.g., \cite{D02, I12, I15, IS17, JYJ10, LFN04}). It was proved that SLFNs with some restricted set of weights still possess the universal approximation property.
For example, Stinchcombe and White \cite{SW90} showed that SLFNs with a polygonal, polynomial spline or analytic activation function and a bounded set of weights have the universal approximation property. Ito \cite{I92} investigated this property of networks using monotone sigmoidal functions (tending to $0$ at minus infinity and $1$ at infinity), with only weights located on the unit sphere. In \cite{I12, I15, IS17}, one of the coauthors considered SLFNs with weights varying on a restricted set of directions and gave several necessary and sufficient conditions for good approximation by such networks. For a set $W$ of weights consisting of two directions, he showed that there is a geometrically explicit solution to the problem. Hahm and Hong \cite{HH04} went further in this direction, and showed that SLFNs with fixed weights can approximate arbitrarily well any univariate function. Since fixed weights reduce the computational expense and training time, this result is of particular interest. In a mathematical formulation, the result reads as follows.

\begin{theorem}[Hahm and Hong \cite{HH04}] \label{thm:HH}
Assume $f$ is a continuous function on a finite segment $[a, b]$ of $\mathbb{R}$. Assume $\sigma$ is a bounded measurable sigmoidal function on $\mathbb{R}$. Then for any sufficiently small $\varepsilon > 0$ there exist constants $c_i$, $\theta_i \in \mathbb{R}$ and positive integers $K$ and $n$ such that
\begin{equation*}
  \left\vert f(x) - \sum_{i=1}^{n} c_i \sigma(Kx - \theta_i) \right\vert < \varepsilon
\end{equation*}
for all $x \in [a, b]$.
\end{theorem}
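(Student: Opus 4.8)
The plan is the classical route for density theorems of this kind --- approximate $f$ by a step function and then replace each jump by a sharply dilated copy of $\sigma$ --- with the twist that the step function must be chosen so that \emph{all} its jumps are small. Since $f$ is continuous on the compact interval $[a,b]$ it is uniformly continuous there, so given $\varepsilon>0$ I would partition $[a,b]$ into $N$ equal pieces (with $N$ large) and let $g$ be the step function taking on each piece the value of $f$ at its midpoint. Writing $B:=1+\sup_{\mathbb R}|\sigma|$, which is finite because $\sigma$ is bounded, uniform continuity lets me take $N$ so large that simultaneously $\|f-g\|_{C[a,b]}<\varepsilon/2$ and any two consecutive values of $g$ differ by less than $\varepsilon/(8B)$. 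Then $g(x)=a_1+\sum_{l=1}^{N-1}\beta_l\,H(x-x_l)$, where $H$ is the Heaviside function, $x_1<\cdots<x_{N-1}$ are the interior partition points, and every jump satisfies $|\beta_l|<\varepsilon/(8B)$.

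Next I would replace the Heaviside steps by dilates of $\sigma$. The elementary point is that for each fixed $t\neq0$ one has $\sigma(Kt)\to H(t)$ as $K\to\infty$ directly from the defining limits of a sigmoidal function; quantitatively, for every $\eta>0$ there is an $M_\eta$ with $|\sigma(t)-1|<\eta$ for $t\ge M_\eta$ and $|\sigma(t)|<\eta$ for $t\le-M_\eta$. Fix $\delta>0$ smaller than half the mesh size $(b-a)/N$ --- so that each $x\in[a,b]$ is within distance $\delta$ of at most one of the points $x_l$ --- put $z_0:=a-1$, and set
\[
  G(x):=a_1\,\sigma\bigl(K(x-z_0)\bigr)+\sum_{l=1}^{N-1}\beta_l\,\sigma\bigl(K(x-x_l)\bigr),
\]
which is a network of the required form $\sum_i c_i\,\sigma(Kx-\theta_i)$ with $n=N+1$ neurons, positive integer weight $K$, and thresholds $\theta_i\in\{Kz_0,Kx_1,\ldots,Kx_{N-1}\}$. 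If $K\ge M_\eta/\delta$ then $|\sigma(K(x-x_l))-H(x-x_l)|<\eta$ at every $x$ with $|x-x_l|\ge\delta$, and $|\sigma(K(x-z_0))-1|<\eta$ for all $x\in[a,b]$ since there $x-z_0\ge1$.

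It then remains to estimate $\|G-g\|_{C[a,b]}$. For a fixed $x\in[a,b]$ and each index $l$ with $|x-x_l|\ge\delta$, the corresponding summand of $G$ differs from $\beta_l H(x-x_l)$ by at most $\eta|\beta_l|$; there is at most one remaining index $l_0$, and for it I would use only the crude bound $|\sigma(K(x-x_{l_0}))-H(x-x_{l_0})|\le B$ together with $|\beta_{l_0}|<\varepsilon/(8B)$; finally the $z_0$-term differs from $a_1$ by at most $|a_1|\eta$. Altogether
\[
  \|G-g\|_{C[a,b]}\le|a_1|\,\eta+\eta\sum_{l=1}^{N-1}|\beta_l|+\frac{\varepsilon}{8},
\]
so taking $\eta$ small enough (depending on $|a_1|$ and on the now-fixed finite sum $\sum_l|\beta_l|$) and then $K\ge M_\eta/\delta$ a sufficiently large integer gives $\|G-g\|_{C[a,b]}<\varepsilon/2$, hence $\|f-G\|_{C[a,b]}<\varepsilon$.

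The choices are made in the order $N$, then $\delta$, then $\eta$, then $K$, and none of them is circular. The one genuine obstacle is the contribution near a jump of $g$: because $\sigma$ is merely bounded and measurable --- not continuous, not monotone --- the dilate $\sigma(K(x-x_l))$ can be far from $H(x-x_l)$ when $x$ is close to $x_l$, and this failure does not improve as $K\to\infty$. The construction defuses it by forcing every jump $\beta_l$ to be so small, using only uniform continuity of $f$, that the worst-case deviation $|\beta_l|\,B$ at such points is already below the tolerance, while away from the jumps one has honest uniform convergence $\sigma(K\,\cdot\,)\to H$ coming straight from the definition of a sigmoidal function.
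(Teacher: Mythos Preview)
The paper does not prove this statement: Theorem~\ref{thm:HH} is quoted from Hahm and Hong~\cite{HH04} as background and no argument for it is given in the text, so there is nothing to compare your approach against.

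On its own merits your proof is correct and follows the standard route (step-function approximation of $f$, then replacement of each Heaviside jump by a sharply dilated sigmoid). The essential difficulty for a merely bounded measurable $\sigma$ --- that $\sigma(K(\cdot-x_l))$ need not approach $H(\cdot-x_l)$ near $x_l$ no matter how large $K$ is --- is handled correctly by forcing every jump $|\beta_l|$ to be small via uniform continuity, so that the single ``bad'' term at any given $x$ contributes at most $|\beta_{l_0}|\,B<\varepsilon/8$. The order of choices $N\to\delta\to\eta\to K$ is indeed non-circular. One trivial slip: your network has $1+(N-1)=N$ neurons, not $N+1$.
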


Note that in this theorem both $K$ and $n$ depend on $\varepsilon$. The smaller the $\varepsilon$, the more neurons in the hidden layer one should take to approximate with the required precision. This phenomenon is pointed out as necessary in many papers. For various activation functions $\sigma$, there are plenty of practical examples, diagrams, tables, etc. in the literature, showing how the number of neurons increases as the error of approximation gets smaller.

It is well known that one of the challenges of neural networks is the process of deciding optimal number of hidden neurons. The other challenge is understanding how to reduce the computational expense and training time. As usual, networks with fixed weights best fit this purpose. In this respect, Cao and Xie \cite{CX10} strengthened the above result by specifying the number of hidden neurons to realize approximation to any continuous function. By implementing modulus of continuity, they established upper bound estimations for the approximation error. It was shown in \cite{CX10} that for the class of Lipschitz functions $\operatorname{Lip}_M(\alpha)$ with a Lipschitz constant $M$ and degree $\alpha$, the approximation bound is $M(1+\left\Vert \sigma \right\Vert )(b-a)n^{-\alpha }$, where $\left\Vert \sigma \right\Vert$ is the $\sup$ of $\sigma(x)$ on $[a, b]$.

Approximation capabilities of SLFNs with a fixed weight were also analyzed in Lin, Guo, Cao and Xu \cite{LGCX13}. Taking the activation function $\sigma$ as a continuous, even and $2 \pi$-periodic function, the authors of \cite{LGCX13} showed that neural networks of the form
\begin{equation} \label{eq:intro2}
  \sum_{i=1}^r c_i \sigma(x - x_i)
\end{equation}
can approximate any continuous function on $[-\pi, \pi]$ with an arbitrary precision $\varepsilon$. Note that all the weights are fixed equal to $1$, and consequently do not depend on $\varepsilon$. To prove this, they first gave an integral representation for trigonometric polynomials, and constructed explicitly a network formed as (\ref{eq:intro2}) that approximates this integral representation. Finally, the obtained result for trigonometric polynomials was used to prove a Jackson-type upper bound for the approximation error.

In this paper, we construct a special sigmoidal activation function which meets both the above mentioned challenges in the univariate setting. In mathematical terminology, we construct a sigmoidal function $\sigma$ for which $K$ and $n$ in the above theorem do not depend on the error $\varepsilon$. Moreover, we can take $K=1$ and $n=2$. That is, only parameters $c_i$ and $\theta_i$ depend on $\varepsilon$. Can we find these numbers? For a large class of functions $f$, especially for analytic functions, our answer to this question is positive. We give an algorithm and a computer program for computing these numbers in practice. Our results are illustrated by several examples. Finally, we show that SLFNs with fixed weights are not capable of approximating all multivariate functions with arbitrary precision.

\section{Construction of a sigmoidal function} \label{sec:construction}

In this section, we construct algorithmically a sigmoidal function $\sigma$
which we use in our main result in the following section. Besides sigmoidality, we take care about smoothness and monotonicity of our $\sigma$ in the weak sense. Here by ``weak monotonicity" we understand behavior of a function whose difference in absolute value from a monotonic function is a sufficiently small number. In this regard, we say that a real function $f$
defined on a set $X\subseteq $ $\mathbb{R}$ is called \emph{$\lambda$-increasing} (respectively, \emph{$\lambda$-decreasing}) if there exists an increasing (respectively, decreasing) function $u \colon X \to \mathbb{R}$ such that $\left\vert f(x)-u(x)\right\vert
\leq \lambda$ for all $x \in X$. Obviously, $0$-monotonicity coincides with the usual concept of monotonicity, and a $\lambda _{1}$-increasing function is $\lambda _{2}$-increasing if $\lambda _{1}\leq \lambda _{2}$.

To start with the construction of $\sigma$, assume that we are given a closed interval $[a, b]$ and a sufficiently small real number $\lambda$. We construct $\sigma$ algorithmically, based on two numbers, namely $\lambda$ and $d := b - a$. The following steps describe the algorithm.

1. Introduce the function
\begin{equation*}
  h(x) := 1 - \frac{\min\{1/2, \lambda\}}{1 + \log(x - d + 1)}.
\end{equation*}
Note that this function is strictly increasing on the real line and satisfies the following properties:
\begin{enumerate}
  \item $0 < h(x) < 1$ for all $x \in [d, +\infty)$;
  \item $1 - h(d) \le \lambda$;
  \item $h(x) \to 1$, as $x \to +\infty$.
\end{enumerate}

We want to construct $\sigma$ satisfying the inequalities
\begin{equation} \label{eq:h_sigma_1}
  h(x) < \sigma(x) < 1
\end{equation}
for $x \in [d, +\infty)$. Then our $\sigma$ will tend to $1$
as $x$ tends to $+\infty $ and obey the inequality
\begin{equation*}
  |\sigma(x) - h(x)| \le \lambda,
\end{equation*}
i.e., it will be a $\lambda$-increasing function.

2. Before proceeding to the construction of $\sigma$, we need to enumerate the monic polynomials with rational coefficients. Let $q_n$ be the Calkin--Wilf sequence (see~\cite{CW00}). Then we can enumerate all the rational numbers by setting
$$r_0 := 0, \quad r_{2n} := q_n, \quad r_{2n-1} := -q_n, \ n = 1, 2, \dots.$$
Note that each monic polynomial with rational coefficients can uniquely be written as $r_{k_0} + r_{k_1} x + \ldots + r_{k_{l-1}} x^{l-1} + x^l$, and each positive rational number determines a unique finite continued fraction
$$
  [m_0; m_1, \ldots, m_l] := m_0 + \dfrac1{m_1 + \dfrac1{m_2 + \dfrac1{\ddots + \dfrac1{m_l}}}}
$$
with $m_0 \ge 0$, $m_1, \ldots, m_{l-1} \ge 1$ and $m_l \ge 2$. We now construct a bijection between the set of all monic polynomials with rational coefficients and the set of all positive rational numbers as follows. To the only zeroth-degree monic polynomial 1 we associate the rational number 1, to each first-degree monic polynomial of the form $r_{k_0} + x$ we associate the rational number $k_0 + 2$, to each second-degree monic polynomial of the form $r_{k_0} + r_{k_1} x + x^2$ we associate the rational number $[k_0; k_1 + 2] = k_0 + 1 / (k_1 + 2)$, and to each monic polynomial
$$
  r_{k_0} + r_{k_1} x + \ldots + r_{k_{l-2}} x^{l-2} + r_{k_{l-1}} x^{l-1} + x^l
$$
of degree $l \ge 3$ we associate the rational number $[k_0; k_1 + 1, \ldots, k_{l-2} + 1, k_{l-1} + 2]$. In other words, we define $u_1(x) := 1$,
$$
  u_n(x) := r_{q_n-2} + x
$$
if $q_n \in \mathbb{Z}$,
$$
  u_n(x) := r_{m_0} + r_{m_1-2} x + x^2
$$
if $q_n = [m_0; m_1]$, and
$$
  u_n(x) := r_{m_0} + r_{m_1-1} x + \ldots + r_{m_{l-2}-1} x^{l-2} + r_{m_{l-1}-2} x^{l-1} + x^l
$$
if $q_n = [m_0; m_1, \ldots, m_{l-2}, m_{l-1}]$ with $l \ge 3$. For example, the first few elements of this sequence are
$$
  1, \quad x^2, \quad x, \quad x^2 - x, \quad x^2 - 1, \quad x^3, \quad x - 1, \quad x^2 + x, \quad \ldots.
$$

3. We start with constructing $\sigma$ on the intervals $[(2n-1)d, 2nd]$, $n = 1, 2, \ldots$. For each monic polynomial $u_n(x) = \alpha_0 + \alpha_1 x + \ldots + \alpha_{l-1} x^{l-1} + x^l$, set
\begin{equation*}
  B_1 := \alpha_0 + \frac{\alpha_1-|\alpha_1|}{2} + \ldots + \frac{\alpha_{l-1} - |\alpha_{l-1}|}{2}
\end{equation*}
and
\begin{equation*}
  B_2 := \alpha_0 + \frac{\alpha_1+|\alpha_1|}{2} + \ldots + \frac{\alpha_{l-1} + |\alpha_{l-1}|}{2} + 1.
\end{equation*}
Note that the numbers $B_1$ and $B_2$ depend on $n$. To avoid complication of symbols, we do not indicate this in the notation.

Introduce the sequence
\begin{equation*}
  M_n := h((2n+1)d), \qquad n = 1, 2, \ldots.
\end{equation*}
Clearly, this sequence is strictly increasing and converges to $1$. 

Now we define $\sigma$ as the function
\begin{equation} \label{eq:sigma_again}
\sigma(x) := a_n + b_n u_n \left( \frac{x}{d} - 2n + 1 \right), \quad x \in [(2n-1)d, 2nd],
\end{equation}
where
\begin{equation} \label{eq:a_1, b_1}
  a_1 := \frac{1}{2}, \qquad b_1 := \frac{h(3d)}{2},
\end{equation}
and
\begin{equation} \label{eq:a_n, b_n}
  a_n := \frac{(1 + 2M_n) B_2 - (2 + M_n) B_1}{3(B_2 - B_1)}, \qquad b_n := \frac{1 - M_n}{3(B_2 - B_1)}, \qquad n = 2, 3, \ldots.
\end{equation}

It is not difficult to notice that for $n>2$ the numbers $a_n$, $b_n$ are the coefficients of the linear function $y = a_n + b_n x$ mapping the closed interval $[B_1, B_2]$ onto the closed interval $[(1+2M_n)/3, (2+M_n)/3]$. Besides, for $n=1$, i.e. on the interval $[d,2d]$,
\begin{equation*}
  \sigma(x) = \frac{1+M_1}{2}.
\end{equation*}
Therefore, we obtain that
\begin{equation} \label{eq:h_M_sigma_1}
  h(x) < M_n < \frac{1+2M_n}{3} \le \sigma(x) \le \frac{2+M_n}{3} < 1,
\end{equation}
for all $x \in [(2n-1)d, 2nd]$, $n = 1$, $2$, $\ldots$.

4. In this step, we construct $\sigma$ on the intervals $[2nd, (2n+1)d]$, $n = 1, 2, \ldots$. For this purpose we use the \emph{smooth transition function}
\begin{equation*}
  \beta_{a,b}(x) := \frac{\widehat{\beta}(b-x)}{\widehat{\beta}(b-x) + \widehat{\beta}(x-a)},
\end{equation*}
where
\begin{equation*}
  \widehat{\beta}(x) := \begin{cases} e^{-1/x}, & x > 0, \\ 0, & x \le 0. \end{cases}
\end{equation*}
Obviously, $\beta_{a,b}(x) = 1$ for $x \le a$, $\beta_{a,b}(x) = 0$ for $x \ge b$, and $0 < \beta_{a,b}(x) < 1$ for $a < x < b$. 

Set
\begin{equation*}
  K_n := \frac{\sigma(2nd) + \sigma((2n+1)d)}{2}, \qquad n = 1, 2, \ldots.
\end{equation*}
Note that the numbers $\sigma(2nd)$ and $\sigma((2n+1)d)$ have already been defined in the previous step. Since both the numbers $\sigma(2nd)$ and $\sigma((2n+1)d)$ lie in the interval $(M_n, 1)$, it follows that $K_n \in (M_n, 1)$.

First we extend $\sigma$ smoothly to the interval $[2nd, 2nd + d/2]$. Take $\varepsilon := (1 - M_n)/6$ and choose $\delta \le d/2$ such that
\begin{equation} \label{eq:epsilon}
  \left| a_n + b_n u_n \left( \frac{x}{d} - 2n + 1 \right) - \left( a_n + b_n u_n(1) \right) \right| \le \varepsilon, \quad x \in [2nd, 2nd + \delta].
\end{equation}
One can choose this $\delta$ as
\begin{equation*}
  \delta := \min\left\{ \frac{\varepsilon d}{b_n C}, \frac{d}{2} \right\},
\end{equation*}
where $C > 0$ is a number satisfying $|u'_n(x)| \le C$ for $x \in
(1, 1.5)$. For example, for $n=1$, $\delta$ can be chosen as $d/2$. Now define $\sigma$ on the first half of the interval $[2nd, (2n+1)d]$ as the function
\begin{equation} \label{eq:sigma_left}
\begin{split}
  \sigma(x) & := K_n - \beta_{2nd, 2nd + \delta}(x) \\
  & \times \left(K_n - a_n - b_n u_n \left( \frac{x}{d} - 2n + 1 \right)\right), \quad x \in \left[ 2nd, 2nd + \frac{d}{2} \right].
\end{split}
\end{equation}

Let us prove that $\sigma(x)$ satisfies the condition~(\ref{eq:h_sigma_1}). Indeed, if $2nd + \delta \le x \le 2nd + d/2$, then there is nothing to prove, since $\sigma(x) = K_n \in (M_n, 1)$. If $2nd \le x < 2nd + \delta$, then $0 < \beta_{2nd, 2nd+\delta}(x) \le 1$ and hence from~(\ref{eq:sigma_left}) it follows that for each $x \in [2nd, 2nd + \delta)$, $\sigma(x)$ is between the numbers $K_n$ and $A_n(x) := a_n + b_n u_n \left( \frac{x}{d} - 2n + 1 \right)$. On the other hand, from~(\ref{eq:epsilon}) we obtain that
\begin{equation*}
  a_n + b_n u_n(1) - \varepsilon \le A_n(x) \le a_n + b_n u_n(1) + \varepsilon,
\end{equation*}
which together with~(\ref{eq:sigma_again}) and~(\ref{eq:h_M_sigma_1}) yields $A_n(x) \in \left[ \frac{1+2M_n}{3} - \varepsilon, \frac{2+M_n}{3} + \varepsilon \right]$ for $x \in [2nd, 2nd + \delta)$. Since $\varepsilon = (1 - M_n)/6$, the inclusion $A_n(x) \in (M_n, 1)$ is valid. Now since both $K_n$ and $A_n(x)$ belong to $(M_n, 1)$, we finally conclude that
\begin{equation*}
  h(x) < M_n < \sigma(x) < 1, \quad \text{for } x \in \left[ 2nd, 2nd + \frac{d}{2} \right].
\end{equation*}

We define $\sigma$ on the second half of the interval in a similar way:
\begin{equation*}
\begin{split}
  \sigma(x) & := K_n - (1 - \beta_{(2n+1)d - \overline{\delta}, (2n+1)d}(x)) \\
  & \times \left(K_n - a_{n+1} - b_{n+1} u_{n+1} \left( \frac{x}{d} - 2n - 1 \right)\right), \quad x \in \left[ 2nd + \frac{d}{2}, (2n+1)d \right],
\end{split}
\end{equation*}
where
\begin{equation*}
  \overline{\delta} := \min\left\{ \frac{\overline{\varepsilon}d}{b_{n+1} \overline{C}}, \frac{d}{2} \right\}, \qquad \overline{\varepsilon} := \frac{1 - M_{n+1}}{6}, \qquad \overline{C} \ge \sup_{[-0.5, 0]} |u'_{n+1}(x)|.
\end{equation*}
One can easily verify, as above, that the constructed $\sigma(x)$ satisfies the condition~(\ref{eq:h_sigma_1}) on $[2nd + d/2, 2nd + d]$ and
\begin{equation*}
  \sigma \left( 2nd + \frac{d}{2} \right) = K_n, \qquad \sigma^{(i)} \left( 2nd + \frac{d}{2} \right) = 0, \quad i = 1, 2, \ldots.
\end{equation*}

Steps 3 and 4 construct $\sigma$ on the interval $[d, +\infty)$.

5. On the remaining interval $(-\infty, d)$, we define $\sigma$ as
\begin{equation*}
  \sigma(x) := \left( 1 - \widehat{\beta}(d-x) \right) \frac{1 + M_1}{2}, \quad x \in (-\infty, d).
\end{equation*}
It is not difficult to verify that $\sigma$ is a strictly increasing, smooth function on $(-\infty, d)$. Note also that $\sigma(x) \to \sigma(d) = (1 + M_1) / 2$, as $x$ tends to $d$ from the left and $\sigma^{(i)}(d) = 0$ for $i = 1$, $2$, $\ldots$. This final step completes the construction of $\sigma$ on the whole real line.

\section{Practical computation and properties of the constructed sigmoidal function} \label{sec:computation}

It should be noted that the above algorithm allows one to compute the constructed $\sigma$ at any point of the real axis instantly. The code of this algorithm is available at \url{http://sites.google.com/site/njguliyev/papers/monic-sigmoidal}. As a practical example, we give here the graph of $\sigma$ (see Figure~\ref{fig:sigma}) and a numerical table (see Table~\ref{tbl:sigma}) containing several computed values of this function on the interval $[0, 20]$. Figure~\ref{fig:sigma100} shows how the graph of $\lambda$-increasing function $\sigma$ changes on the interval $[0,100]$ as the parameter $\lambda$ decreases.

The above $\sigma$ obeys the following properties:
\begin{enumerate}
  \item $\sigma$ is sigmoidal;
  \item $\sigma \in C^{\infty}(\mathbb{R})$;
  \item $\sigma$ is strictly increasing on $(-\infty, d)$ and $\lambda$-strictly increasing on $[d, +\infty)$;
  \item $\sigma$ is easily computable in practice.
\end{enumerate}

All these properties are easily seen from the above exposition. But the essential property of our sigmoidal function is its ability to approximate an arbitrary continuous function using only a fixed number of translations and scalings of $\sigma$. More precisely, only two translations and scalings are sufficient. We formulate this important property as a theorem in the next section.

\begin{figure}
  \includegraphics[width=1.0\textwidth]{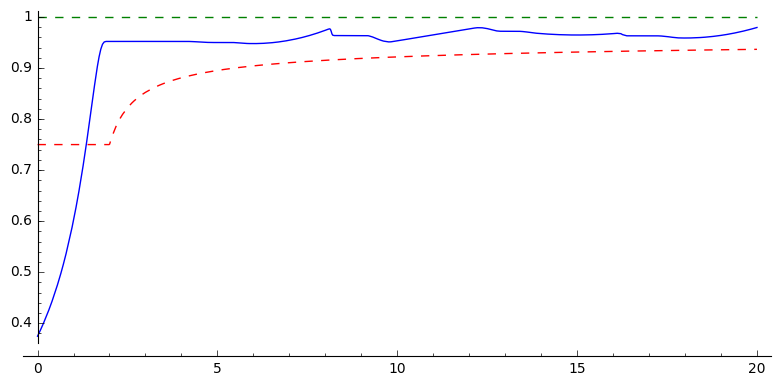}
  \caption{The graph of $\sigma$ on $[0, 20]$ ($d = 2$, $\lambda = 1/4$)}
  \label{fig:sigma}
\end{figure}

\begin{table}
  \caption{Some computed values of $\sigma$ ($d = 2$, $\lambda = 1/4$)}
  \label{tbl:sigma}
  \begin{tabular}{|c|c|c|c|c|c|c|c|c|c|} \hline
    $t$ & $\sigma$ & $t$ & $\sigma$ & $t$ & $\sigma$ & $t$ & $\sigma$ & $t$ & $\sigma$ \\ \hline
    $0.0$ & $0.37462$ & $4.0$ & $0.95210$ & $8.0$ & $0.97394$ & $12.0$ & $0.97662$ & $16.0$ & $0.96739$ \\ \hline
    $0.4$ & $0.44248$ & $4.4$ & $0.95146$ & $8.4$ & $0.96359$ & $12.4$ & $0.97848$ & $16.4$ & $0.96309$ \\ \hline
    $0.8$ & $0.53832$ & $4.8$ & $0.95003$ & $8.8$ & $0.96359$ & $12.8$ & $0.97233$ & $16.8$ & $0.96309$ \\ \hline
    $1.2$ & $0.67932$ & $5.2$ & $0.95003$ & $9.2$ & $0.96314$ & $13.2$ & $0.97204$ & $17.2$ & $0.96307$ \\ \hline
    $1.6$ & $0.87394$ & $5.6$ & $0.94924$ & $9.6$ & $0.95312$ & $13.6$ & $0.97061$ & $17.6$ & $0.96067$ \\ \hline
    $2.0$ & $0.95210$ & $6.0$ & $0.94787$ & $10.0$ & $0.95325$ & $14.0$ & $0.96739$ & $18.0$ & $0.95879$ \\ \hline
    $2.4$ & $0.95210$ & $6.4$ & $0.94891$ & $10.4$ & $0.95792$ & $14.4$ & $0.96565$ & $18.4$ & $0.95962$ \\ \hline
    $2.8$ & $0.95210$ & $6.8$ & $0.95204$ & $10.8$ & $0.96260$ & $14.8$ & $0.96478$ & $18.8$ & $0.96209$ \\ \hline
    $3.2$ & $0.95210$ & $7.2$ & $0.95725$ & $11.2$ & $0.96727$ & $15.2$ & $0.96478$ & $19.2$ & $0.96621$ \\ \hline
    $3.6$ & $0.95210$ & $7.6$ & $0.96455$ & $11.6$ & $0.97195$ & $15.6$ & $0.96565$ & $19.6$ & $0.97198$ \\ \hline
  \end{tabular}
\end{table}

\begin{figure}
  \includegraphics[width=0.75\textwidth]{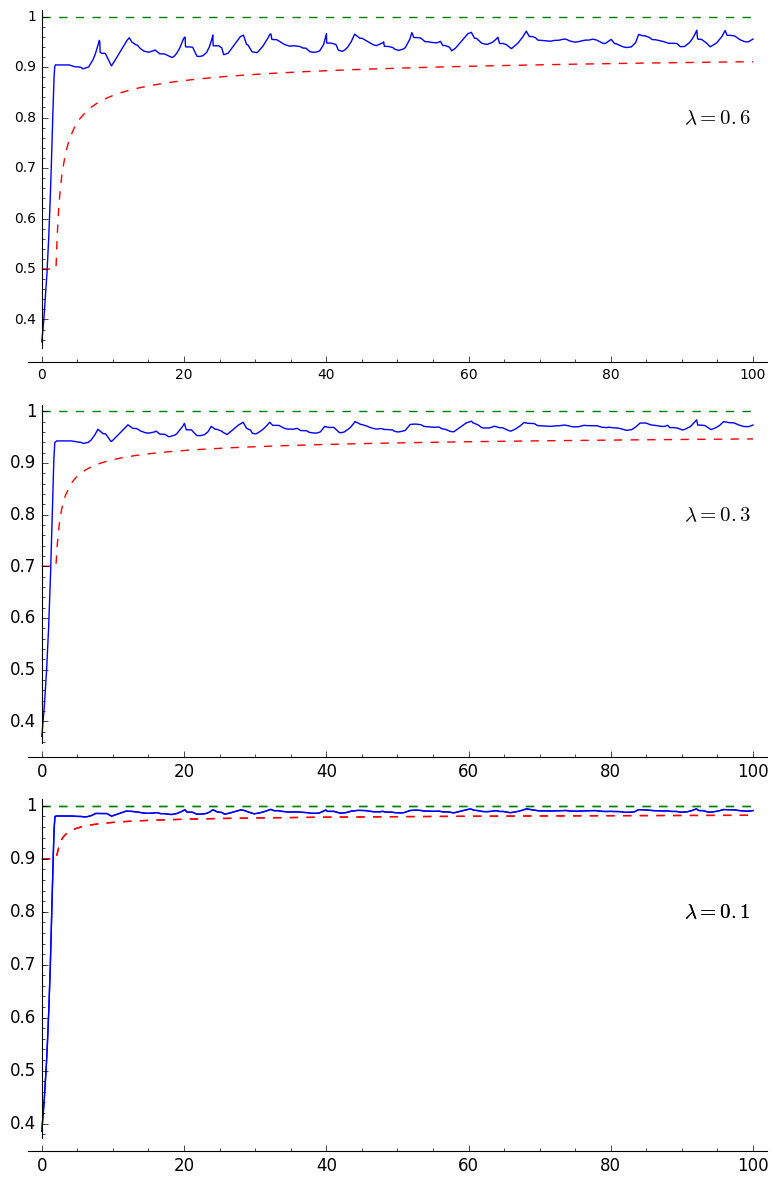}
  \caption{The graph of $\sigma$ on $[0, 100]$ ($d = 2$)}
  \label{fig:sigma100}
\end{figure}

\section{Main results} \label{sec:main}

The main results of the paper are formulated in the following two theorems.

\begin{theorem} \label{thm:1}
Assume that $f$ is a continuous function on a finite segment $[a, b]$ of $\mathbb{R}$ and $\sigma$ is the sigmoidal function constructed in Section~\ref{sec:construction}. Then for any sufficiently small $\varepsilon > 0$ there exist constants $c_1$, $c_2$, $\theta_1$ and $\theta_2$ such that
\begin{equation*}
  | f(x) - c_1 \sigma(x - \theta_1) - c_2 \sigma(x - \theta_2) | < \varepsilon
\end{equation*}
for all $x \in [a, b]$.
\end{theorem}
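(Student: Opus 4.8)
The plan is to exploit the construction of $\sigma$ directly: on each interval $[(2n-1)d, 2nd]$ the function $\sigma$ is an affine image of a monic polynomial $u_n(x/d - 2n+1)$, and the sequence $(u_n)$ exhausts \emph{all} monic polynomials with rational coefficients. Since $f$ is continuous on $[a,b]$, by the Weierstrass approximation theorem we may first pick a polynomial $p$ with $\|f-p\|_{C[a,b]} < \varepsilon/2$, and after a harmless perturbation of its coefficients we may take $p$ to have rational coefficients; writing $p = \gamma \cdot (\text{monic polynomial with rational coefficients})$ when $p$ is nonconstant, we may assume $p = \gamma\, q$ for some monic rational-coefficient polynomial $q$ and some $\gamma \in \mathbb{R}$. (The degenerate cases where $p$ is constant or $\gamma = 0$ are handled separately and trivially.)

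Next I would translate the target interval onto one of the ``polynomial pieces'' of $\sigma$. The change of variable $t \mapsto t/d - 2n + 1$ maps $[(2n-1)d, 2nd]$ onto $[0,1]$, so I would first rescale $x \in [a,b]$ to $[0,1]$ via an affine map, then note that $q$, viewed as a polynomial in the rescaled variable, is again a monic polynomial with rational coefficients after clearing the leading coefficient — hence it equals $u_n$ for some index $n$. Then, for $x \in [a,b]$, setting the affine correspondence $x \leftrightarrow y \in [(2n-1)d, 2nd]$, we have
\[
  \sigma(y) = a_n + b_n\, u_n\!\left(\frac{y}{d} - 2n + 1\right) = a_n + b_n\, q(\text{rescaled }x),
\]
so that $q(\text{rescaled }x) = \dfrac{\sigma(y) - a_n}{b_n}$, a single scaling-and-translation of $\sigma$. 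Composing with the affine maps and absorbing $\gamma$, we get $p(x) = c_1 \sigma(x - \theta_1) + c_0$ for suitable real constants; the leftover additive constant $c_0$ is then itself represented by a \emph{second} translate of $\sigma$, using that $\sigma$ is (nearly) constant — indeed exactly constant, equal to $(1+M_1)/2$ — on $[d,2d]$ (the piece $u_1 \equiv 1$). This is why two neurons, not one, are needed: one carries the genuine polynomial, the other supplies the constant. Adjusting, we obtain $|f(x) - c_1\sigma(x-\theta_1) - c_2\sigma(x-\theta_2)| < \varepsilon$ on $[a,b]$.

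The main obstacle, and the only delicate point, is that $\sigma$ is not \emph{exactly} an affine image of $u_n$ on all of $[(2n-1)d, 2nd]$ in a way usable here — actually it is, by~\eqref{eq:sigma_again}, so the genuine issue is instead the \emph{quantitative} one: the indices $n$ needed grow as $\varepsilon \to 0$ (better polynomial approximation needs higher-degree, larger-denominator rational polynomials), and correspondingly $\theta_1, \theta_2$ and $c_1, c_2$ depend on $\varepsilon$ — but $K=1$ and the number of neurons $2$ do not, which is exactly the asserted improvement over Theorem~\ref{thm:HH}. I would therefore organize the write-up so that the Weierstrass/rational-perturbation step fixes the degree and coefficients first, the index $n$ is then read off from the enumeration in Step~2, and finally the affine bookkeeping (rescaling $[a,b] \to [0,1]$, then $[0,1] \to [(2n-1)d,2nd]$, absorbing $a_n, b_n, \gamma$) is carried out to extract $c_1, \theta_1$ and, from the constant term, $c_2, \theta_2$. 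Care must also be taken that the rational perturbation of $p$'s coefficients is small enough on $[a,b]$ — controlled by $\max(1,|a|,|b|)^{\deg p}$ times the degree — to keep the total error below $\varepsilon$.
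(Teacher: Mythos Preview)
Your overall architecture is exactly the paper's: approximate by a polynomial, normalize to a monic polynomial with rational coefficients, locate it as some $u_n$ in the enumeration, read off $c_1$ and $\theta_1$ from the affine relation $\sigma = a_n + b_n u_n(\cdot/d - 2n+1)$ on $[(2n-1)d,2nd]$, and absorb the leftover additive constant using the piece where $u_1 \equiv 1$ (so $\sigma$ is exactly constant on $[d,2d]$). You also correctly identify why the composite affine map $[a,b]\to[0,1]\to[(2n-1)d,2nd]$ has slope $1$, which is what makes the weight equal to $1$.

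There is, however, one concrete gap. You perform the rational perturbation of the coefficients on $[a,b]$ first and only afterwards pass to the rescaled variable $t=(x-a)/d$. Your claim that ``$q$, viewed as a polynomial in the rescaled variable, is again a monic polynomial with rational coefficients after clearing the leading coefficient'' is false when $a$ or $d=b-a$ is irrational: substituting $x=a+dt$ into a rational-coefficient polynomial $q(x)$ produces coefficients in $t$ that are polynomials in $a$ and $d$, and dividing by the (possibly irrational) leading coefficient $d^{\deg q}$ does not repair this. Consequently you cannot, in general, identify the resulting polynomial with any $u_n$. The fix is simply to reverse the order: first set $g(t)=f(a+dt)$ on $[0,1]$, apply Weierstrass and the rational perturbation to $g$ there, and only then divide by the leading coefficient to land on some $u_n$. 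This is precisely how the paper arranges it, and once you make that swap your argument goes through verbatim, including your handling of the constant term via the $u_1$-piece and your careful remark that only $c_i,\theta_i$ depend on $\varepsilon$.
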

\begin{proof}
Set $d := b - a$ and divide the interval $[d, +\infty)$ into the segments $[d, 2d]$, $[2d, 3d]$, $\ldots$. It follows from~(\ref{eq:sigma_again}) that
\begin{equation} \label{eq:sigma_d}
  \sigma(d x + (2n-1) d) = a_n + b_n u_n(x), \qquad x \in [0, 1]
\end{equation}
for $n = 1$, $2$, $\ldots$. Here $a_n$ and $b_n$ are computed by~(\ref{eq:a_1, b_1}) and~(\ref{eq:a_n, b_n}) for $n=1$ and $n>1$, respectively.

From~(\ref{eq:sigma_d}) it follows that for each $n = 1$, $2$, $\ldots$,
\begin{equation} \label{eq:u_d}
  u_n(x) = \frac{1}{b_n} \sigma(d x + (2n-1) d) - \frac{a_n}{b_n}.
\end{equation}

Let now $g$ be any continuous function on the unit interval $[0, 1]$. By the density of polynomials with rational coefficients in the space of continuous functions over any compact subset of $\mathbb{R}$, for any $\varepsilon > 0$ there exists a polynomial $p(x)$ of the above form such that
\begin{equation*}
  |g(x) - p(x)| < \varepsilon
\end{equation*}
for all $x \in [0, 1]$. Denote by $p_0$ the leading coefficient of $p$. If $p_0 \ne 0$ (i.e., $p \not\equiv 0$) then we define $u_n$ as $u_n(x) := p(x) / p_0$, otherwise we just set $u_n(x) := 1$. In both cases
\begin{equation*}
  |g(x) - p_0 u_n(x)| < \varepsilon, \qquad x \in [0, 1].
\end{equation*}
This together with~(\ref{eq:u_d}) means that
\begin{equation*}
  |g(x) - c_1 \sigma(d x - s_1) - c_0| < \varepsilon
\end{equation*}
for some $c_0$, $c_1$, $s_1 \in \mathbb{R}$ and all $x \in [0, 1]$. Namely, $c_1 = p_0 / b_n$, $s_1 = d - 2nd$ and $c_0 = p_0 a_n / b_n$. On the other hand, we can write $c_0 = c_2 \sigma(d x - s_2)$, where $c_2 := 2 c_0 / (1 + h(3d))$ and $s_2 := -d$. Hence,
\begin{equation} \label{eq:g_c_1_c_2}
  |g(x) - c_1 \sigma(d x - s_1) - c_2 \sigma(d x - s_2)| < \varepsilon.
\end{equation}
Note that~(\ref{eq:g_c_1_c_2}) is valid for the unit interval $[0, 1]$. Using linear transformation it is not difficult to go from $[0, 1]$ to the interval $[a, b]$. Indeed, let $f \in C[a, b]$, $\sigma$ be constructed as above, and $\varepsilon$ be an arbitrarily small positive number. The transformed function $g(x) = f(a + (b - a) x)$ is well defined on $[0, 1]$ and we can apply the inequality~(\ref{eq:g_c_1_c_2}). Now using the inverse transformation $x = (t - a) / (b - a)$, we can write
\begin{equation*}
  |f(t) - c_1 \sigma(t - \theta_1) - c_2 \sigma(t - \theta_2)| < \varepsilon
\end{equation*}
for all $t \in [a, b]$, where $\theta_1 = a + s_1$ and $\theta_2 = a + s_2$. The last inequality completes the proof.
\end{proof}

Since any compact subset of the real line is contained in a segment $[a, b]$, the following generalization of Theorem~\ref{thm:1} holds.

\begin{theorem} \label{thm:2}
Let $Q$ be a compact subset of the real line and $d$ be its diameter. Let $\lambda$ be any positive number. Then one can algorithmically construct a computable sigmoidal activation function $\sigma \colon \mathbb{R} \to \mathbb{R}$, which is infinitely differentiable, strictly increasing on $(-\infty, d)$, $\lambda$-strictly increasing on $[d,+\infty)$, and satisfies the following property: For any $f \in C(Q)$ and $\varepsilon > 0$ there exist numbers $c_1$, $c_2$, $\theta_1$ and $\theta_2$ such that
\begin{equation*}
  |f(x) - c_1 \sigma(x - \theta_1) - c_2 \sigma(x - \theta_2)| < \varepsilon
\end{equation*}
for all $x \in Q$.
\end{theorem}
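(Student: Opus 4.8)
The plan is to reduce Theorem~\ref{thm:2} to Theorem~\ref{thm:1} by a simple covering argument. Since $Q$ is a compact subset of the real line, it is bounded, and hence there exist real numbers $a$ and $b$ with $Q \subseteq [a, b]$; moreover we may take $b - a$ to be the diameter $d$ of $Q$ by setting $a := \inf Q$ and $b := \sup Q$, both of which are attained because $Q$ is compact. This choice of $a, b$ is precisely the pair used to feed the construction in Section~\ref{sec:construction}, so the sigmoidal function $\sigma$ built there — which depends only on $d = b - a$ and $\lambda$ — is the activation function we output. Its asserted properties (infinitely differentiable on $\mathbb{R}$, strictly increasing on $(-\infty, d)$, $\lambda$-strictly increasing on $[d, +\infty)$, algorithmically computable) are exactly items (1)–(4) recorded at the end of Section~\ref{sec:computation} and established there.

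The second step is the extension of $f$. Given $f \in C(Q)$, invoke the Tietze extension theorem (or, more elementarily, piecewise-linear interpolation across the gaps of $Q$ inside $[a,b]$) to obtain $\tilde{f} \in C[a,b]$ with $\tilde{f}|_Q = f$. Then apply Theorem~\ref{thm:1} to $\tilde f$, $[a,b]$ and the given $\varepsilon > 0$: there exist $c_1, c_2, \theta_1, \theta_2 \in \mathbb{R}$ with $|\tilde f(x) - c_1 \sigma(x - \theta_1) - c_2 \sigma(x - \theta_2)| < \varepsilon$ for all $x \in [a,b]$. Restricting this inequality to $x \in Q$ and using $\tilde f(x) = f(x)$ there yields the claimed bound, which completes the argument.

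There is essentially no obstacle here; the content is entirely in Theorem~\ref{thm:1} and in the construction of Section~\ref{sec:construction}. The only points deserving a line of care are: first, that one is free to choose the interval $[a,b]$ in the construction to be the tightest one containing $Q$, so that ``$d$'' in the statement of Theorem~\ref{thm:2} genuinely matches the parameter driving the construction of $\sigma$ — this is what makes the sentence ``one can algorithmically construct'' legitimate, since the algorithm needs $d$ and $\lambda$ as its only inputs; and second, that the passage from $f \in C(Q)$ to $\tilde f \in C[a,b]$ loses nothing, since we only ever read off values on $Q$. I would therefore present this as a short proof: fix $a := \inf Q$, $b := \sup Q$, note $d = b-a$, construct $\sigma$ as in Section~\ref{sec:construction}, extend $f$ continuously to $[a,b]$, apply Theorem~\ref{thm:1}, and restrict back to $Q$.
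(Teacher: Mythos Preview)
Your proof is correct and follows essentially the same approach as the paper, which treats Theorem~\ref{thm:2} as an immediate corollary of Theorem~\ref{thm:1} with the single remark that any compact subset of the real line is contained in a segment $[a,b]$. You have simply filled in the details the paper leaves implicit, in particular the Tietze-type extension of $f$ from $Q$ to $[a,b]$ and the identification of $b-a$ with the diameter $d$.
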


\begin{remark}
The idea of using monic polynomials (see Section~\ref{sec:construction} and the proof above) is new in the numerical analysis of neural networks with limited number of hidden neurons. In fact, if one is interested more in a theoretical than in a practical result, then any countable dense subset of $C[0, 1]$ suffices. Maiorov and Pinkus \cite{MP99} used such a subset to prove existence of a sigmoidal, monotonic and analytic activation function, and consequently a neural network with a fixed number of hidden neurons, which approximates arbitrarily well any continuous function. Note that the result is of theoretical value and the authors of \cite{MP99} do not suggest constructing and using their sigmoidal function. In our previous work \cite{GI16}, we exploited a sequence of all polynomials with rational coefficients to construct a new universal sigmoidal function. Note that in \cite{GI16} the problem of fixing weights in approximation by neural networks was not considered. Although the construction was efficient in the sense of computation of that sigmoidal function, some difficulties appeared while computing an approximating neural network parameters for some relatively simple approximated functions (see Remark~2 in~\cite{GI16}). This was a reason why we avoided giving practical numerical examples. The usage of monic polynomials in this instance turned out to be advantageous in reducing ``running time'' of the algorithm for computing the mentioned network parameters. This allows one to approximate various functions with sufficiently small precision and obtain all the required parameters (scaling coefficients and thresholds) in practice. We give corresponding numerical results in the next section.
\end{remark}

\section{Numerical results} \label{sec:numerical}

We prove in Theorem~\ref{thm:1} that any continuous function on $[a, b]$ can be approximated arbitrarily well by SLFNs with the fixed weight $1$ and with only two neurons in the hidden layer. An activation function $\sigma$ for such a network is constructed in Section~\ref{sec:construction}. We have seen from the proof that our approach is totally constructive. One can evaluate the value of $\sigma$ at any point of the real axis and draw its graph instantly using the programming interface at the URL shown at the beginning of Section~\ref{sec:computation}. In the current section, we demonstrate our result in various examples. For different error bounds we find the parameters $c_1$, $c_2$, $\theta_1$ and $\theta_2$ in Theorem~\ref{thm:1}. All computations were done in SageMath~\cite{Sage}. For computations, we use the following algorithm, which works well for analytic functions. Assume $f$ is a function, whose Taylor series around the point $(a + b) / 2$ converges uniformly to $f$ on $[a, b]$, and $\varepsilon > 0$.
\begin{enumerate}
  \item Consider the function $g(t) := f(a + (b - a) t)$, which is well-defined on $[0, 1]$;
  \item Find $k$ such that the $k$-th Taylor polynomial
\begin{equation*}
  T_k(x) := \sum_{i=0}^k \frac{g^{(i)}(1/2)}{i!} \left( x - \frac{1}{2} \right)^i
\end{equation*}
satisfies the inequality $|T_k(x) - g(x)| \le \varepsilon / 2$ for all $x \in [0, 1]$;
  \item Find a polynomial $p$ with rational coefficients such that
\begin{equation*}
  |p(x) - T_k(x)| \le \frac{\varepsilon}{2}, \qquad x \in [0, 1],
\end{equation*}
and denote by $p_0$ the leading coefficient of this polynomial;
  \item If $p_0 \ne 0$, then find $n$ such that $u_n(x) = p(x) / p_0$. Otherwise, set $n := 1$;
  \item For $n=1$ and $n > 1$ evaluate $a_n$ and $b_n$ by~(\ref{eq:a_1, b_1}) and~(\ref{eq:a_n, b_n}), respectively;
  \item Calculate the parameters of the network as
\begin{equation*}
  c_1 := \frac{p_0}{b_n}, \qquad c_2 := \frac{2 p_0 a_n}{b_n (1 + h(3d))}, \qquad \theta_1 := b - 2 n (b - a), \qquad \theta_2 := 2 a - b;
\end{equation*}

  \item Construct the network $\mathcal{N}=c_{1}\sigma (x-\theta _{1})+c_{2}\sigma (x-\theta _{2}).$ Then $\mathcal{N}$ gives an $\varepsilon
$-approximation to $f.$
\end{enumerate}

In the sequel, we give four practical examples. To be able to make comparisons between these examples, all the considered functions are given on the same interval $[-1, 1]$. First we select the polynomial function $f(x)=x^{3}+x^{2}-5x+3$ as a target function. We investigate the sigmoidal neural network approximation to $f(x)$. This function was also considered in \cite{HH04}. Note that in \cite{HH04} the authors chose the sigmoidal function as
\begin{equation*}
  \sigma(x) = \begin{cases}
    1, & \text{if } x \ge 0, \\
    0, & \text{if } x < 0,
  \end{cases}
\end{equation*}
and obtained the numerical results (see Table~\ref{tbl:Hahm}) for SLFNs with $8$, $32$, $128$, $532$ neurons in the hidden layer (see also~\cite{CX10} for an additional constructive result concerning the error of approximation in this example).

\begin{table}
  \caption{The Heaviside function as a sigmoidal function}
  \label{tbl:Hahm}
  \begin{tabular}{|c|c|c|} \hline
    $N$ & Number of neurons ($2N^2$) & Maximum error \\ \hline
    $2$ & $8$ & $0.666016$ \\ \hline
    $4$ & $32$ & $0.165262$ \\ \hline
    $8$ & $128$ & $0.041331$ \\ \hline
    $16$ & $512$ & $0.010333$ \\ \hline
  \end{tabular}
\end{table}

As it is seen from the table, the number of neurons in the hidden layer increases as the error bound decreases in value. This phenomenon is no longer true for our sigmoidal function (see Section~\ref{sec:construction}). Using Theorem~\ref{thm:1}, we can construct explicitly an SLFN with only two neurons in the hidden layer, which approximates the above polynomial with arbitrarily given precision. Here by \emph{explicit construction} we mean that all the network parameters can be computed directly. Namely, the calculated values of these parameters are as follows: $c_1 \approx 2059.373597$, $c_2 \approx -2120.974727$, $\theta_1 = -467$, and $\theta_2 = -3$. It turns out that for the above polynomial we have an exact representation. That is, on the interval $[-1, 1]$ we have the identity
$$
  x^3 + x^2 - 5x + 3 \equiv c_1 \sigma(x - \theta_1) + c_2 \sigma(x - \theta_2).
$$

Let us now consider the other polynomial function
$$
  f(x) = 1 + x + \frac{x^2}{2} + \frac{x^3}{6} + \frac{x^4}{24} + \frac{x^5}{120} + \frac{x^6}{720}.
$$
For this function we do not have an exact representation as above. Nevertheless, one can easily construct a $\varepsilon$-approximating network with two neurons in the hidden layer for any sufficiently small approximation error $\varepsilon$. Table~\ref{tbl:polynomial} displays numerical computations of the network parameters for six different approximation errors.

\begin{table}
  \caption{Several $\varepsilon$-approximators of the function $1 + x + x^2 / 2 + x^3 / 6 + x^4 / 24 + x^5 / 120 + x^6 / 720$}
  \label{tbl:polynomial}
  \begin{tabular}{|c|c|c|l|c|c|} \hline
    Number of & \multicolumn{4}{|c|}{Parameters of the network} & Maximum \\ \cline{2-5}
    neurons & $c_1$ & $c_2$ & \multicolumn{1}{c|}{$\theta_1$} & $\theta_2$ & error \\ \hline
    $2$ & $2.0619 \times 10^{2}$ & $2.1131 \times 10^{2}$ & $-1979$ & $-3$ & $0.95$ \\ \hline
    $2$ & $5.9326 \times 10^{2}$ & $6.1734 \times 10^{2}$ & $-1.4260 \times 10^{8}$ & $-3$ & $0.60$ \\ \hline
    $2$ & $1.4853 \times 10^{3}$ & $1.5546 \times 10^{3}$ & $-4.0140 \times 10^{22}$ & $-3$ & $0.35$ \\ \hline
    $2$ & $5.1231 \times 10^{2}$ & $5.3283 \times 10^{2}$ & $-3.2505 \times 10^{7}$ & $-3$ & $0.10$ \\ \hline
    $2$ & $4.2386 \times 10^{3}$ & $4.4466 \times 10^{3}$ & $-2.0403 \times 10^{65}$ & $-3$ & $0.04$ \\ \hline
    $2$ & $2.8744 \times 10^{4}$ & $3.0184 \times 10^{4}$ & $-1.7353 \times 10^{442}$ & $-3$ & $0.01$ \\ \hline
  \end{tabular}
\end{table}

\begin{figure}
  \includegraphics[width=1.0\textwidth]{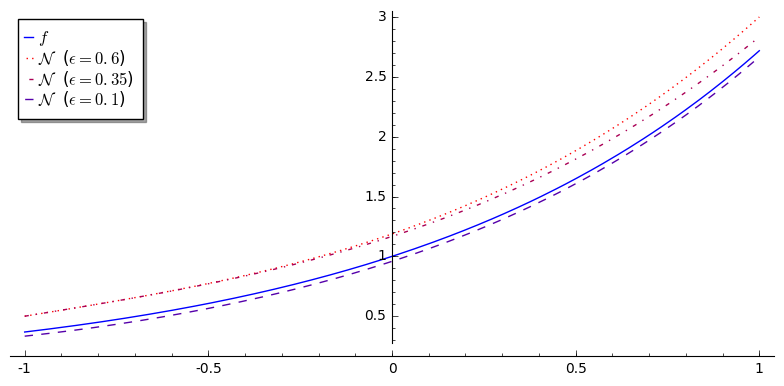}
  \caption{The graphs of $f(x) = 1 + x + x^2 / 2 + x^3 / 6 + x^4 / 24 + x^5 / 120 + x^6 / 720$ and some of its approximators ($\lambda = 1/4$)}
  \label{fig:polynomial}
\end{figure}

At the end we consider the nonpolynomial functions $f(x) = 4x / (4 + x^2)$ and $f(x) = \sin x - x \cos(x + 1)$. Tables~\ref{tbl:nonpolynomial}  and \ref{tbl:nonpolynomial2} display all the parameters of the $\varepsilon$-approximating neural networks for the above six approximation error bounds. As it is seen from the tables, these bounds do not alter the number of hidden neurons. Figures~\ref{fig:polynomial}, \ref{fig:nonpolynomial} and \ref{fig:nonpolynomial2} show how graphs of some constructed networks $\mathcal{N}$ approximate the corresponding target functions $f$.

\begin{table}
  \caption{Several $\varepsilon$-approximators of the function $4x / (4 + x^2)$}
  \label{tbl:nonpolynomial}
  \begin{tabular}{|@{\hspace{4pt}}c@{\hspace{4pt}}|c|c|l|c|c|} \hline
    Number of & \multicolumn{4}{|c|}{Parameters of the network} & Maximum \\ \cline{2-5}
    neurons & $c_1$ & $c_2$ & \multicolumn{1}{c|}{$\theta_1$} & $\theta_2$ & error \\ \hline
    $2$ & $\phantom{-}1.5965 \times 10^{2}$ & $\phantom{-}1.6454 \times 10^{2}$ & $-283$ & $-3$ & $0.95$ \\ \hline
    $2$ & $\phantom{-}1.5965 \times 10^{2}$ & $\phantom{-}1.6454 \times 10^{2}$ & $-283$ & $-3$ & $0.60$ \\ \hline
    $2$ & $-1.8579 \times 10^{3}$ & $-1.9428 \times 10^{3}$ & $-6.1840 \times 10^{11}$ & $-3$ & $0.35$ \\ \hline
    $2$ & $\phantom{-}1.1293 \times 10^{4}$ & $\phantom{-}1.1842 \times 10^{4}$ & $-4.6730 \times 10^{34}$ & $-3$ & $0.10$ \\ \hline
    $2$ & $\phantom{-}2.6746 \times 10^{4}$ & $\phantom{-}2.8074 \times 10^{4}$ & $-6.8296 \times 10^{82}$ & $-3$ & $0.04$ \\ \hline
    $2$ & $-3.4218 \times 10^{6}$ & $-3.5939 \times 10^{6}$ & $-2.9305 \times 10^{4885}$ & $-3$ & $0.01$ \\ \hline
  \end{tabular}
\end{table}

\begin{table}
  \caption{Several $\varepsilon$-approximators of the function $\sin x - x \cos(x + 1)$}
  \label{tbl:nonpolynomial2}
  \begin{tabular}{|c|c|c|l|c|c|} \hline
    Number of & \multicolumn{4}{|c|}{Parameters of the network} & Maximum \\ \cline{2-5}
    neurons & $c_1$ & $c_2$ & \multicolumn{1}{c|}{$\theta_1$} & $\theta_2$ & error \\ \hline
    $2$ & $\phantom{-}8.950 \times 10^{3}$ & $\phantom{-}9.390 \times 10^{3}$ & $-3.591 \times 10^{53}$ & $-3$ & $0.95$ \\ \hline
    $2$ & $\phantom{-}3.145 \times 10^{3}$ & $\phantom{-}3.295 \times 10^{3}$ & $-3.397 \times 10^{23}$ & $-3$ & $0.60$ \\ \hline
    $2$ & $\phantom{-}1.649 \times 10^{5}$ & $\phantom{-}1.732 \times 10^{5}$ & $-9.532 \times 10^{1264}$ & $-3$ & $0.35$ \\ \hline
    $2$ & $-4.756 \times 10^{7}$ & $-4.995 \times 10^{7}$ & $-1.308 \times 10^{180281}$ & $-3$ & $0.10$ \\ \hline
    $2$ & $-1.241 \times 10^{7}$ & $-1.303 \times 10^{7}$ & $-5.813 \times 10^{61963}$ & $-3$ & $0.04$ \\ \hline
    $2$ & $\phantom{-}1.083 \times 10^{9}$ & $\phantom{-}1.138 \times 10^{9}$ & $-2.620 \times 10^{5556115}$ & $-3$ & $0.01$ \\ \hline
  \end{tabular}
\end{table}

\begin{figure}
  \includegraphics[width=1.0\textwidth]{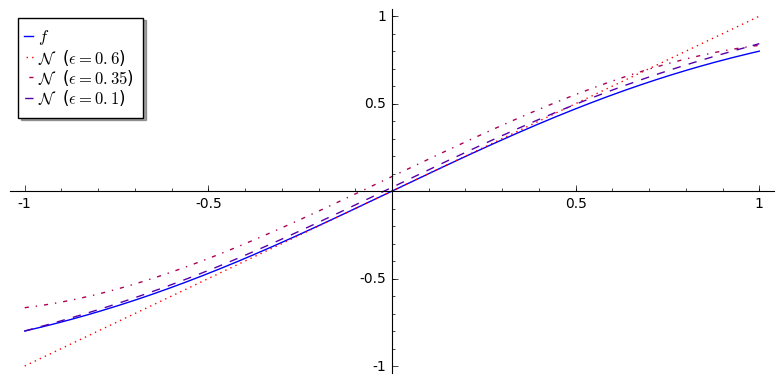}
  \caption{The graphs of $f(x) = 4x / (4 + x^2)$ and some of its approximators ($\lambda = 1/4$)}
  \label{fig:nonpolynomial}
\end{figure}

\begin{figure}
  \includegraphics[width=1.0\textwidth]{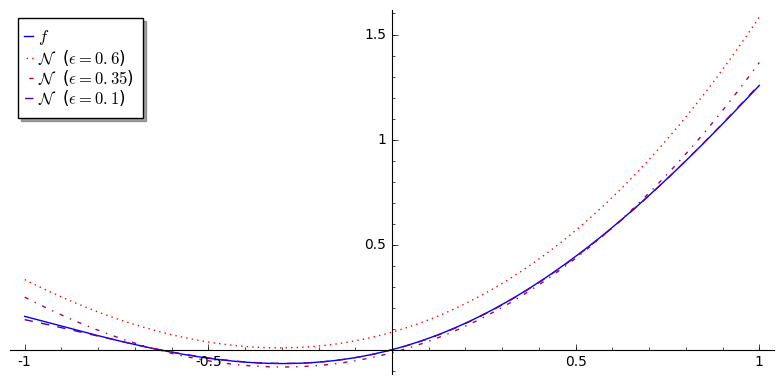}
  \caption{The graphs of $f(x) = \sin x - x \cos(x + 1)$ and some of its approximators ($\lambda = 1/4$)}
  \label{fig:nonpolynomial2}
\end{figure}

\section{Analysis of the multivariate case} \label{sec:multivariate}

In this section, we want to draw the reader's attention to the following question. Do SLFNs with fixed weights preserve their universal approximation property in the multivariate setting? That is, if networks of the form
\begin{equation} \label{eq:an}
  h(\mathbf{x}) = \sum_{i=1}^r c_i \sigma(\mathbf{w} \cdot \mathbf{x} - \theta_i),
\end{equation}
where the weight $\mathbf{w} \in \mathbb{R}^d$ is fixed for all units of the hidden layer, but which may be different for different networks $h$, can approximate any continuous multivariate function $f(x_1, \ldots, x_d)$, $d > 1$, within arbitrarily small tolerance? Note that if $\mathbf{w}$ is fixed for all $h$, then it is obvious that there is a multivariate function which cannot be approximated by networks of the form (\ref{eq:an}). Indeed, the linear functional
\begin{equation*}
  F(f) = f(x_1) - f(x_2),
\end{equation*}
where $\mathbf{x}_{1}$ and $\mathbf{x}_{2}$ are selected so that $\mathbf{w} \cdot \mathbf{x}_{1} = \mathbf{w} \cdot \mathbf{x}_{2}$, annihilates all functions $h$. Since the functional $F$ is nontrivial, the set of all functions $h$, which we denote in the sequel by $\mathcal{H}$, is not dense in $C(Q)$ for an arbitrary compact set $Q$ containing the points $\mathbf{x}_{1}$ and $\mathbf{x}_{2}$; hence approximation to all continuous functions cannot be possible on such compact sets $Q$. The above question, in the case where $\mathbf{w}$ is different for different networks $h$, is rather complicated. The positive answer to this question would mean, for example, that Theorem~\ref{thm:HH} admits a generalization to $d$-variable functions. Unfortunately, our answer to this question is negative. The details are as follows. Each summand in~(\ref{eq:an}) is a function depending on the inner product $\mathbf{w} \cdot \mathbf{x}$. Thus, the whole sum itself, i.e. the function $h(\mathbf{x})$ is a function of the form $g(\mathbf{w} \cdot \mathbf{x})$. Note that functions of the form $g(\mathbf{w} \cdot \mathbf{x})$ are called \emph{ridge functions}. The literature abounds with the use of such functions and their linear combinations (see, e.g., \cite{I16, P15} and a great deal of references therein). We see that the set $\mathcal{H}$ is a subset of the set of ridge functions $\mathcal{R} := \left\{ g(\mathbf{w} \cdot \mathbf{x}) \colon \mathbf{w} \in \mathbb{R}^{d} \setminus \{ \mathbf{0} \},\ g \in C(\mathbb{R}) \right\}$. Along with $\mathcal{R}$, let us also consider the sets
\begin{equation*}
  \mathcal{R}_{k} := \left\{ \sum_{i=1}^{k} g_{i}(\mathbf{w}^{i} \cdot \mathbf{x}) \colon
\mathbf{w}^{i} \in \mathbb{R}^{d} \setminus \{ \mathbf{0} \},\ g_{i} \in C(\mathbb{R}),\ i = 1, \ldots, k \right\}.
\end{equation*}
Note that in $\mathcal{R}_{k}$ we vary over both the vectors $\mathbf{w}^{i}$ and the functions $g_{i}$, whilst $k$ is fixed. Clearly, $\mathcal{R} = \mathcal{R}_{1}$. In \cite{LP93}, Lin and Pinkus proved that for any $k \in \mathbb{N}$, there exists a function $f \in C(\mathbb{R}^{d})$ and a compact set $Q \subset \mathbb{R}^{d}$ such that
\begin{equation*}
  \inf_{g \in \mathcal{R}_{k}} \left\Vert f - g \right\Vert > 0.
\end{equation*}
Here $\left\Vert \cdot \right\Vert $ denotes the uniform norm. It follows from this result that for each $k \in \mathbb{N}$ the set $\mathcal{R}_{k}$ (hence $\mathcal{R}$) is not dense in $C(\mathbb{R}^{d})$ in the topology of uniform convergence on compacta. Since $\mathcal{H} \subset \mathcal{R}$, we obtain that the set $\mathcal{H}$ cannot be dense either. Thus there are always continuous multivariate functions which cannot be approximated arbitrarily well by SLFNs with fixed weights. This phenomenon justifies why we and the other researchers (see Introduction) investigate universal approximation property of such networks only in the univariate case.

The above analysis leads us to the following general negative result on the approximation by SLFNs with limited weights.
\begin{theorem}
For any continuous function $\sigma \colon \mathbb{R\rightarrow R}$, there is a multivariate continuous function which cannot be approximated arbitrarily well by neural networks of the form
\begin{equation} \label{eq:an2}
  \sum_{i=1}^{r} c_{i} \sigma(\mathbf{w}^{i} \cdot \mathbf{x} - \theta_{i}),
\end{equation}
where we vary over all $r\in \mathbb{N}$, $c_{i},\theta _{i}\in \mathbb{R}$, $\mathbf{w}^{i}\in \mathbb{R}^{d}$, but the  number of pairwise independent vectors (weights) $\mathbf{w}^{i}$ in each network (\ref{eq:an2}) is uniformly bounded by some positive integer $k$ (which is the same for all networks).
\end{theorem}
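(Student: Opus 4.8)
The plan is to reduce the statement to the Lin--Pinkus non-density theorem for sums of $k$ ridge functions recalled in Section~\ref{sec:multivariate}. The crux is that bounding the number of pairwise independent weights forces every admissible network into the single fixed class $\mathcal{R}_{k}$, after which the conclusion is immediate.

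First I would carry out the grouping. Fix $\sigma$ and the integer $k$, and let $N(\mathbf{x}) = \sum_{i=1}^{r} c_{i} \sigma(\mathbf{w}^{i} \cdot \mathbf{x} - \theta_{i})$ be any network whose weight vectors contain at most $k$ pairwise independent ones. Declare two nonzero weights equivalent when they are proportional; this partitions the nonzero weights among finitely many lines through the origin, and if there were more than $k$ of these lines then picking one weight from each would give more than $k$ pairwise independent vectors, a contradiction. Hence there are at most $k$ lines; pick representatives $\mathbf{v}^{1}, \ldots, \mathbf{v}^{m}$ with $m \le k$. Each summand whose weight $\mathbf{w}^{i}$ lies on the $j$-th line equals $c_{i} \sigma(\lambda_{i} \mathbf{v}^{j} \cdot \mathbf{x} - \theta_{i})$ for a nonzero scalar $\lambda_{i}$, so the sum of these summands is $g_{j}(\mathbf{v}^{j} \cdot \mathbf{x})$ with $g_{j}(t) := \sum c_{i} \sigma(\lambda_{i} t - \theta_{i})$ continuous in one variable; any summand with $\mathbf{w}^{i} = \mathbf{0}$ contributes a constant, which may be absorbed into $g_{1}$ (and if $m = 0$ the whole network is constant, hence in $\mathcal{R}_{1} \subseteq \mathcal{R}_{k}$). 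Therefore $N \in \mathcal{R}_{k}$.

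It then remains to invoke \cite{LP93} in the form stated above: there exist $f \in C(\mathbb{R}^{d})$ and a compact set $Q \subset \mathbb{R}^{d}$ with $\inf_{g \in \mathcal{R}_{k}} \left\Vert f - g \right\Vert > 0$, where $\left\Vert \cdot \right\Vert$ is the uniform norm on $Q$. Since every network of the form~(\ref{eq:an2}) subject to the hypothesis lies in $\mathcal{R}_{k}$, we obtain $\inf \left\Vert f - N \right\Vert > 0$ over all admissible $N$ as well; this $f$ is the desired multivariate continuous function, and the proof is finished. The only place where any care is needed is the bookkeeping in the reduction step --- verifying that ``at most $k$ pairwise independent weights'' really means ``at most $k$ distinct weight directions'', and handling zero weights together with the all-constant network --- since all the genuine difficulty is contained in the cited Lin--Pinkus result, which we are entitled to use.
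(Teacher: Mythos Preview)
Your proof is correct and follows essentially the same approach as the paper: the paper's argument (given in the discussion preceding the theorem, with the theorem itself stated without a separate proof) is precisely to observe that any such network lies in $\mathcal{R}_{k}$ and then to invoke the Lin--Pinkus non-density result for $\mathcal{R}_{k}$. Your write-up simply makes explicit the grouping of proportional weights and the handling of zero weights that the paper leaves to the reader.
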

This theorem shows a particular limitation of neural networks with one hidden layer. We refer the reader to \cite{CLM96, L17} for interesting results and discussions around other limitations of such networks.

\section*{Acknowledgements}

The research of the second author was supported by the Azerbaijan National Academy of Sciences under the program ``Approximation by neural networks and some problems of frames''.


\begin{thebibliography}{99}

\bibitem{CW00} N. Calkin and H. S. Wilf,
\emph{Recounting the rationals},
Amer. Math. Monthly \textbf{107} (2000), 360--367.

\bibitem{CX10} F. Cao and T. Xie,
\emph{The construction and approximation for feedforword neural networks with fixed weights},
Proceedings of the ninth international conference on machine learning and cybernetics, Qingdao, 2010, pp. 3164--3168.

\bibitem{CC93} T. Chen and H. Chen,
\emph{Approximation of continuous functionals by neural networks with application to dynamic systems},
IEEE Trans. Neural Networks \textbf{4} (1993), 910--918.

\bibitem{CL92} C. K. Chui and X. Li,
\emph{Approximation by ridge functions and neural networks with one hidden layer},
J. Approx. Theory \textbf{70} (1992), 131--141.

\bibitem{CLM96} C. K. Chui, X. Li and H. N. Mhaskar,
\emph{Limitations of the approximation capabilities of neural networks with one hidden layer},
Adv. Comput. Math. \textbf{5} (1996), no. 2-3, 233--243.

\bibitem{CS13} D. Costarelli and R. Spigler,
\emph{Constructive approximation by superposition of sigmoidal functions},
Anal. Theory Appl. \textbf{29} (2013), 169--196.

\bibitem{C90} N. E. Cotter,
\emph{The Stone--Weierstrass theorem and its application to neural networks},
IEEE Trans. Neural Networks \textbf{1} (1990), 290--295.

\bibitem{C89} G. Cybenko,
\emph{Approximation by superpositions of a sigmoidal function},
Math. Control Signal Systems \textbf{2} (1989), 303--314.

\bibitem{D02} S. Draghici,
\emph{On the capabilities of neural networks using limited precision weights},
Neural Networks \textbf{15} (2002), 395--414.

\bibitem{F89} K. Funahashi,
\emph{On the approximate realization of continuous mapping by neural networks},
Neural Networks \textbf{2} (1989), 183--192.

\bibitem{GW88} A. R. Gallant and H. White,
\emph{There exists a neural network that does not make avoidable mistakes},
Proceedings of the IEEE 1988 international conference on neural networks, vol. 1, IEEE Press, New York, 1988, pp. 657--664.

\bibitem{GI16} N. J. Guliyev and V. E. Ismailov,
\emph{A single hidden layer feedforward network with only one neuron in the hidden layer can approximate any univariate function},
Neural Computation \textbf{28} (2016), no. 7, 1289--1304.
\href{https://arxiv.org/abs/1601.00013}{arXiv:1601.00013}

\bibitem{HH04} N. Hahm and B.I. Hong,
\emph{An approximation by neural networks with a fixed weight},
Comput. Math. Appl. \textbf{47} (2004), no. 12, 1897--1903.

\bibitem{H91} K. Hornik,
\emph{Approximation capabilities of multilayer feedforward networks},
Neural Networks \textbf{4} (1991), 251--257.

\bibitem{IKM17} A. Iliev, N. Kyurkchiev and S. Markov,
\emph{On the approximation of the step function by some sigmoid functions},
Math. Comput. Simulation \textbf{133} (2017), 223--234.

\bibitem{I12} V. E. Ismailov,
\emph{Approximation by neural networks with weights varying on a finite set of directions},
J. Math. Anal. Appl. \textbf{389} (2012), no. 1, 72--83.

\bibitem{I15} \bysame,
\emph{Approximation by ridge functions and neural networks with a bounded number of neurons},
Appl. Anal. \textbf{94} (2015), no. 11, 2245--2260.

\bibitem{I16} \bysame,
\emph{Approximation by sums of ridge functions with fixed directions} (Russian),
Algebra i Analiz, \textbf{28} (2016), no. 6, 20--69.

\bibitem{IS17} V. E. Ismailov and E. Savas,
\emph{Measure theoretic results for approximation by neural networks with limited weights},
Numer. Funct. Anal. Optim. \textbf{38} (2017), no. 7, 819--830.

\bibitem{I92} Y. Ito,
\emph{Approximation of continuous functions on $\mathbb{R}^d$ by linear combinations of shifted rotations of a sigmoid function with and without scaling},
Neural Networks \textbf{5} (1992), 105--115.

\bibitem{JYJ10} B. Jian, C. Yu and Y. Jinshou,
\emph{Neural networks with limited precision weights and its application in embedded systems},
Proceedings of the the second international workshop on education technology and computer science, Wuhan, 2010, pp. 86--91.

\bibitem{K92} V. K\r{u}rkov\'{a},
\emph{Kolmogorov's theorem and multilayer neural networks},
Neural Networks \textbf{5} (1992), 501--506.

\bibitem{LLPS93} M. Leshno, V. Ya. Lin, A. Pinkus and S. Schocken,
\emph{Multilayer feedforward networks with a non-polynomial activation function can approximate any function},
Neural Networks \textbf{6} (1993), 861--867.

\bibitem{LFN04} Y. Liao, S.-C. Fang and H. L. W. Nuttle,
\emph{A neural network model with bounded-weights for pattern classification},
Comput. Oper. Res. \textbf{31} (2004), 1411--1426.

\bibitem{L17} S. Lin,
\emph{Limitations of shallow nets approximation},
Neural Networks \textbf{94} (2017), 96--102.

\bibitem{LGCX13} S. Lin, X. Guo, F. Cao and Z. Xu,
\emph{Approximation by neural networks with scattered data}
Appl. Math. Comput. \textbf{224} (2013), 29--35.

\bibitem{LP93} V. Ya. Lin and A. Pinkus,
\emph{Fundamentality of ridge functions},
J. Approx. Theory \textbf{75} (1993), 295--311.

\bibitem{MP99} V. Maiorov and A. Pinkus,
\emph{Lower bounds for approximation by MLP neural networks},
Neurocomputing \textbf{25} (1999), 81--91.

\bibitem{MM92} H. N. Mhaskar and C. A. Micchelli,
\emph{Approximation by superposition of a sigmoidal function and radial basis functions},
Adv. Appl. Math. \textbf{13} (1992), 350--373.

\bibitem{P99} A. Pinkus,
\emph{Approximation theory of the MLP model in neural networks},
Acta numerica, 1999, Cambridge Univ. Press, Cambridge, 1999, pp. 143--195.

\bibitem{P15} \bysame,
\emph{Ridge functions},
Cambridge University Press, Cambridge, 2015.

\bibitem{S61} P. C. Sikkema,
\emph{Der Wert einiger Konstanten in der Theorie der Approximation mit Bernstein-Polynomen},
Numer. Math. \textbf{3} (1961), 107--116.

\bibitem{Sage} W. A. Stein et al.,
\emph{Sage Mathematics Software (Version 7.6)},
The Sage Developers, 2017, \url{http://www.sagemath.org}.

\bibitem{SW90} M. Stinchcombe and H. White,
\emph{Approximating and learning unknown mappings using multilayer feedforward networks with bounded weights},
Proceedings of the 1990 IEEE international joint conference on neural networks, vol. 3, IEEE, New York, 1990, pp. 7--16.

\end{thebibliography}
\end{document}